\newtheorem{proposition}{Proposition}
\def\b{\ensuremath\boldsymbol}
\def\BibTeX{{\rm B\kern-.05em{\sc i\kern-.025em b}\kern-.08em
    T\kern-.1667em\lower.7ex\hbox{E}\kern-.125emX}}
\begin{document}
\bstctlcite{IEEEexample:BSTcontrol}

% the note above the first page:
\AddToShipoutPictureBG*{%
  \AtPageUpperLeft{%
    \setlength\unitlength{1in}%
    \hspace*{\dimexpr0.5\paperwidth\relax}
    \makebox(0,-0.75)[c]{\normalsize Accepted for presentation at the IEEE International Conference on Systems, Man, and Cybernetics (SMC) 2020}
    }}

% \title{Metric Learning in Histopathological Image Datasets: The Effect of Different Domain Spaces} 

\title{Isolation Mondrian Forest \\for Batch and Online Anomaly Detection}

% {\footnotesize \textsuperscript{*}Note: Sub-titles are not captured in Xplore and
% should not be used}
% \thanks{Identify applicable funding agency here. If none, delete this.}
% }

% \author{Authors}

\author{
\IEEEauthorblockN{Haoran
Ma$^{1,2,*}$, Benyamin Ghojogh$^{1,*}$, Maria N.
Samad$^{1,*}$, Dongyu Zheng$^{1,3}$, Mark Crowley$^1$\thanks{$^*$The first three authors contributed equally to this work.}
}
\IEEEauthorblockA{\\$^1$Department of Electrical and Computer Engineering, University of Waterloo, Waterloo, ON, Canada \\
$^2$SoundHound, Toronto, ON, Canada \\ $^3$Facebook, Seattle, Washington, USA  \\
Emails: h65ma@uwaterloo.ca, bghojogh@uwaterloo.ca, mnsamad@uwaterloo.ca,\\ dongyu.zheng@edu.uwaterloo.ca, mcrowley@uwaterloo.ca
}}

\maketitle

\begin{abstract}
We propose a new method, named isolation Mondrian forest (iMondrian forest), for batch and online anomaly detection. The proposed method is a novel hybrid of isolation forest and Mondrian forest which are existing methods for batch anomaly detection and online random forest, respectively. iMondrian forest takes the idea of isolation, using the depth of a node in a tree, and implements it in the Mondrian forest structure. The result is a new data structure which can accept streaming data in an online manner while being used for anomaly detection. Our experiments show that iMondrian forest mostly performs better than isolation forest in batch settings and has better or comparable performance against other batch and online anomaly detection methods.
\end{abstract}

\begin{IEEEkeywords}
Anomaly detection, Mondrian forest, isolation forest, random forest, iMondrian forest.
\end{IEEEkeywords}

\section{Introduction}

Anomaly detection \cite{chandola2009anomaly} refers to the task of detecting the outliers in a dataset where the anomalies are different from the regular pattern of data. 
Anomaly detection can be performed on either offline or online data where the data are processed as a batch or stream, respectively. 
Numerous methods have been proposed for batch and online anomaly detection for use in a multitude of applications such as fraud detection, disease diagnosis, and intrusion detection. 

One batch anomaly detection method is Local Outlier Factor (LOF) \cite{breunig2000lof} which defines a measure for local density of every data point according to its neighbors and then compares the local density of every point with its neighbors to find the anomalies. 
One-class SVM \cite{scholkopf2000support} is another method which estimates a function to be $1$ and $-1$ in the regions with high and low density of data, respectively. It can also use kernels to map the data to a higher dimensional feature space for a possible better performance. 
Isolation forest (iForest)  \cite{liu2008isolation} is an isolation-based method \cite{liu2012isolation} which isolates anomalies rather than separating normal points. Its main idea is that anomalies are separated shallower in the tree by a random forest so the depth of a node can determine the anomaly score of a point.

One method for online anomaly detection is incremental LOF \cite{pokrajac2007incremental}, which updates the local density and other characteristics of the LOF algorithm for the new data points in the stream. It also updates the density for the existing points which are affected by the new data in their $k$-nearest neighbors. 
There is another method which uses kernel density estimation for online anomaly detection which assigns the anomaly label to a point if it deviates significantly from the estimated density \cite{ahmed2009online}; note that, this method does not calculate scores for the points.
Oversampling Principal Component Analysis (osPCA) is an online anomaly detection method which oversamples a point and calculates the principal direction both with and without the oversampled point. If the principal direction deviates significantly, the point is considered to be anomalous. There exist two versions of osPCA: osPCA1 with power method \cite{yeh2009anomaly} and osPCA2 with least squares approximation \cite{lee2013anomaly}.

Decision forests \cite{criminisi2012decision} are ensembles of decision trees which partition the input space for different tasks such as classification and regression.
Random forest \cite{breiman2001random} is a forest of binary trees which randomly sample from data points and features for the different trees. As the trees in random forest are not very correlated, the variance of estimation is reduced because of bootstrap aggregating or bagging \cite{ghojogh2019theory}.
Extremely randomized trees \cite{geurts2006extremely} select both the split dimension and split value randomly. 
Some forests, such as Hoeffding trees \cite{domingos2000mining}, are proposed for online processing of data \cite{abdulsalam2007streaming,saffari2009line}.  
The Hoeffding tree learns a regression tree model but stores no data points along the way. New points help build confidence in the split at each node, and once the confidence is high enough, the node is split. Later arriving points are used to start learning the new leaves which could in turn become new internal nodes. 
Mondrian forest (MForest) is another online forest method for classification \cite{lakshminarayanan2014mondrian} and regression \cite{lakshminarayanan2016mondrian}.
It is based on the Mondrian processes \cite{roy2008mondrian} which are a family of distributions over tree structures. 
The trees in MForest can grow incrementally without complex tree rotation and correction which previous streaming tree methods required. The new nodes in MForest can also be added as internal nodes. 
% MForests have been used in \cite{sun2017detecting} for supervised anomaly detection where anomaly detection is seen as a classification task with two classes of normal and anomalous. 
While Hoeffding trees perform online learning and scale well for regression and classification, their particular incremental approach to tree building and lack of restructuring makes them less suitable than MForest to utilize the isolation concept \cite{liu2012isolation} for anomaly detection.
In this paper we propose \textit{isolation Mondrian forests} (\textit{iMondrian forest} or \textit{iMForest}), a novel hybrid of iForest and MForest providing the best of both worlds: an anomaly detector which can perform unsupervised learning of anomalous and normal points for both batch and online processing. 

Assume we have a batch of data denoted by $\mathcal{X} := \{\b{x}_i\}_{i=1}^n$ where $n$ is the sample size and $d$ is the dimensionality of data, i.e., $\b{x}_i \in \mathbb{R}^d$. We may also have $m$ new data points denoted by $\mathcal{X}^{(n)} := \{\b{x}_i^{(n)}\}_{i=1}^m$. Our goal is to detect the anomalies in $\mathcal{X}$ and $\mathcal{X}^{(n)}$ using both batch and online processing of data. 

\section{Background}

In this section, we review isolation forest \cite{liu2008isolation} and Mondrian forest \cite{lakshminarayanan2014mondrian} in more detail.

\subsection{Isolation Forest}\label{section_isolattion_forest}

An \textit{isolation forest} (\textit{iForest}) \cite{liu2008isolation} is an ensemble of isolation trees. An \textit{isolation tree} is an extremely randomized tree \cite{geurts2006extremely} where the tree is a proper binary tree and its splitting dimension $q$ and splitting value $p$ are randomly selected at every node. The tree grows until every leaf includes exactly one data point, i.e., $|\mathcal{X}|=1$ in the leaf node.
Let $h(\b{x})$ denote the path length for a data point $\b{x}$ in the tree where the path length is defined as the number of edges $\b{x}$ traverses from the root to the leaf it belongs to. The average height of an isolation tree is $\log(n)$. As the structure of the isolation tree is equivalent to the binary search tree, the estimation of average path length in isolation trees is \cite{bruno2000data}: 
\begin{align}\label{equation_c}
c(n) := 2\, h(n-1) - \big(2\,(n-1)/n\big),
\end{align}
where $h(i)$ is the $i$-th harmonic number, defined as: 
\begin{align}
h(i) := \ln(i) + 0.5772156649,
\end{align}
where the added constant is the Euler's constant. 
The anomaly score of a point $\b{x}$ is:
\begin{align}\label{equation_anomaly_score}
s(\b{x}) := 2^{-\mathbb{E}(l(\b{x})) / c(n)},
\end{align}
where $\mathbb{E}(l(\b{x}))$ is the expected path length for the data point $\b{x}$ among the trees of the forest:
\begin{align}\label{equation_expected_path_length}
\mathbb{E}(l(\b{x})) := \frac{1}{|\mathcal{F}|} \sum_{t=1}^{|\mathcal{F}|} l_t(\b{x}),
\end{align}
where $l_t(\b{x})$ is the path length of $\b{x}$ in the $t$-th tree and $|\mathcal{F}|$ is the population of trees in the forest.
The intuition of anomaly score in iForests is that the anomalies tend to be isolated sooner, i.e., shallower in the tree, while the normal points require more splits to become isolated, i.e., deeper the tree.
The anomaly score is in the range $s(\b{x}) \in [0, 1]$ where $s(\b{x}) = 0$ and $s(\b{x}) = 1$ corresponded to normal and anomaly points, respectively. The $s=0.5$ may be a proper threshold for anomaly detection in iForests. 
Note that the authors of the original iForest use a subset of the data with subsampling size specified by $\psi = 256$ \cite{liu2008isolation}.

\subsection{Mondrian Forest}

A \textit{Mondrian forest} \cite{lakshminarayanan2014mondrian} is an ensemble of Mondrian trees which are based on the Mondrian process \cite{roy2008mondrian}. \textit{Mondrian processes} are families of random hierarchical binary partitions and probability distributions over tree data structures. While Mondrian processes are infinite structures, Mondrian trees are restrictions of Mondrian processes on a finite set of points. 
Every node $r$ in the Mondrian tree has a \textit{split time} $\tau_r$ which increases with the depth of the node. The split time is zero at the root and infinite at the leaves of the tree.

Let $\hat{\mathcal{B}}_r := (\hat{\ell}_{r1}, \hat{u}_{r1}] \times \dots \times (\hat{\ell}_{rd}, \hat{u}_{rd}]$ for the $r$-th node, where $\hat{\ell}_{rj}$ and $\hat{u}_{rj}$ are the lower and upper bounds of hyper-rectangular block $\hat{\mathcal{B}}_r$ along dimension $j$. The Mondrian tree considers the smallest block containing the data points in a node; therefore, it defines $\mathcal{B}_r := (\ell_{r1}, u_{r1}] \times \dots \times (\ell_{rd}, u_{rd}]$ where $\ell_{rj}$ and $u_{rj}$ are the lower and upper bounds of the smallest hyper-rectangular block $\mathcal{B}_r$ along dimension $j$. For a node indexed by $r$, let $\b{\ell}_{\mathcal{X}_b} = [\ell_{r1}, \dots, \ell_{rd}]^\top$ and $\b{u}_{\mathcal{X}_b} = [u_{r1}, \dots, u_{rd}]^\top$; thus, $\b{\ell}_{\mathcal{X}_b} := \text{min}(\{\b{x}_i^{(b)} ~ | ~ \forall i\})$ and $\b{u}_{\mathcal{X}_b} := \text{max}(\{\b{x}_i^{(b)} ~ | ~ \forall i\})$ where $\mathcal{X}_b = \{\b{x}_i^{(b)}\} = \{\b{x}_i ~ | ~ \b{x}_i \in \mathcal{B}_r\}$.
For the $r$-th node, the split time of a node is determined as $\tau_{\text{parent}(j)} + e$ where $e$ is a random variable from an exponential distribution with a rate which is a function of $\b{\ell}_{\mathcal{X}_b}$ and $\b{u}_{\mathcal{X}_b}$.
Depending on whether the split time of the node is smaller or greater than the split time of its parent, it is put before or after the parent node in the tree. 

\SetAlCapSkip{0.5em}
\IncMargin{0.8em}
\begin{algorithm2e}[!t]
\DontPrintSemicolon
    \textbf{Procedure: } BatchTraining($\mathcal{X}$, $|\mathcal{F}|$)\;
    \textbf{Input: } $\mathcal{X} = \{\b{x}_i\}_{i=1}^n$, $|\mathcal{F}|$: number of trees \;
    \For{tree $t$ from $1$ to $|\mathcal{F}|$}{
        $\mathcal{F}$ $\leftarrow$ $\mathcal{F}$ $\cup$ iMondrianTree(root, $\mathcal{X}, 0$)\;
    }
    \textbf{Return} Forest $\mathcal{F}$\;
\caption{Batch training in iMondrian forest.}\label{algorithm_batch_training}
\end{algorithm2e}
\DecMargin{0.8em}

\SetAlCapSkip{0.5em}
\IncMargin{0.8em}
\begin{algorithm2e*}[!t]
\DontPrintSemicolon
    \textbf{Procedure: } iMondrianTree($r$, $\mathcal{X}, \tau_\text{parent}$)\;
    \textbf{Input: } $r$: node pointer, $\mathcal{X} = \{\b{x}_i\}_{i=1}^n$, $\tau_\text{parent}$: split time of the parent node \;
    $\mathcal{X}_b = \{\b{x}_i^{(b)}\} = \{\b{x}_i ~ | ~ \b{x}_i \in \mathcal{B}_r\}$\;
    $\b{\ell}_{\mathcal{X}_b} \leftarrow \text{min}(\{\b{x}_i^{(b)} ~ | ~ \forall i\})$\;
    $\b{u}_{\mathcal{X}_b} \leftarrow \text{max}(\{\b{x}_i^{(b)} ~ | ~ \forall i\})$\;
    \uIf{$|\mathcal{X}_b| > 1$}{
        $e \sim \text{Exp}\big(\lambda = \sum_{j=1}^d (\b{u}_{\mathcal{X}_b}(j) - \b{\ell}_{\mathcal{X}_b}(j))\big)$\;
        $\tau \leftarrow \tau_\text{parent} + e$\;
        $q \leftarrow $ sample from $\{1, \dots, d\}$ with distribution $\propto (\b{u}_{\mathcal{X}_b}(j) - \b{\ell}_{\mathcal{X}_b}(j))$ for the $j$-th dimension\;
        $p \sim U(\b{\ell}_{\mathcal{X}_b}(q), \b{u}_{\mathcal{X}_b}(q))$\;
        $\mathcal{X}_\text{left} \leftarrow \{\b{x} \in \mathcal{X}_b ~ | ~ \b{x}(q) < p\}$\;
        $\mathcal{X}_\text{right} \leftarrow \{\b{x} \in \mathcal{X}_b ~ | ~ \b{x}(q) \geq p\}$\;
        Left $\leftarrow$ iMondrianTree(leftChild($r$), $\mathcal{X}_\text{left}, \tau$)\;
        Right $\leftarrow$ iMondrianTree(rightChild($r$), $\mathcal{X}_\text{right}, \tau$)\;
        \textbf{Return} internalNode\{leftChild: Left, rightChild: Right, splitDim: q, splitVal: p, time: $\tau$, $\text{dim}_\text{min}$: $\b{\ell}_{\mathcal{X}_b}$, $\text{dim}_\text{max}$: $\b{u}_{\mathcal{X}_b}$, population: $|\mathcal{X}_b|$\}\;
    }
    \Else{
        \textbf{Return} leafNode\{time: $\infty$, $\text{dim}_\text{min}$: $\b{\ell}_{\mathcal{X}_b}$, $\text{dim}_\text{max}$: $\b{u}_{\mathcal{X}_b}$, population: $1$\}\;
    }
\caption{Constructing iMondrian tree.}\label{algorithm_iMondrian_tree}
\end{algorithm2e*}
\DecMargin{0.8em}

Mondrian trees can be updated with new data making them suitable for online streaming domains. When a new data point arrives, it is checked whether it belongs to an existing block or not. If not, the lower and upper errors (deviations) from the block are calculated. Again, a random variable is sampled from an exponential distribution with a rate which is a function of the lower and upper errors. As before, the split time of the new node is calculated and depending on it, its location in the tree is determined. In this way, new nodes can be added in the middle of a tree and not just grown at the end of tree like in regular online random forests.
Note that the Mondrian forest is designed for classification \cite{lakshminarayanan2014mondrian} and regression \cite{lakshminarayanan2016mondrian} so its authors propose an analysis for smooth posterior updates in the blocks. However, the Mondrian forest can be used for unsupervised purposes where the posterior analysis and the pausing process can be bypassed. This is useful for this work because anomaly detection is usually considered as an unsupervised task.

\section{iMondrian Forest}

The iMondrian forest can be used for both batch and online anomaly detection. In the following, we cover both cases.

\subsection{Batch Processing}

\subsubsection{Training}

The iMondrian forest is an ensemble of iMondrian trees. Algorithm \ref{algorithm_batch_training} shows this ensemble where $\mathcal{X} := \{\b{x}_i\}_{i=1}^n$ is the batch of data and $|\mathcal{F}|$ is the number of trees in the forest.  
Inspired by \cite{liu2008isolation}, the data in a batch can be subsampled with subsampling size $\psi = 256$ for growing the tree. If subsampling is used, $\mathcal{X}$ denotes the sample of data and $n = \psi$. 
The iMondrian tree is grown recursively as detailed in Algorithm \ref{algorithm_iMondrian_tree}. 
As with Mondrian trees, bounds of hyper-rectangular blocks are defined $\mathcal{B}_r := (\ell_{r1}, u_{r1}] \times \dots \times (\ell_{rd}, u_{rd}]$ along each of $d$ dimensions for the $r$-th node.
Let $\mathcal{X}_b := \{\b{x}_i^{(b)}\} := \{\b{x}_i ~ | ~ \b{x}_i \in \mathcal{B}_r\}$ be the subset of data which exist in the smallest block enclosing the node.
For a node, the lower and upper bounds of $\mathcal{B}_r$ along the features are denoted by $\b{\ell}_{\mathcal{X}_b}$ and $\b{u}_{\mathcal{X}_b}$, respectively. 

In order to split a block, we sample a random variable $e$ from an exponential distribution with the rate $\lambda = \sum_{j=1}^d (\b{u}_{\mathcal{X}_b}(j) - \b{\ell}_{\mathcal{X}_b}(j))$ which is the linear dimension of $\mathcal{B}_r$. We set the split time of a node to the split time of its parent plus $e$. 
We sample the dimension of the split, $q$, from a discrete distribution proportional to $(\b{u}_{\mathcal{X}_b}(j) - \b{\ell}_{\mathcal{X}_b}(j))$. 
We sample the value of the split, $p$, from a continuous uniform distribution $U(\b{\ell}_{\mathcal{X}_b}(q), \b{u}_{\mathcal{X}_b}(q))$. 
The tree is grown until every node contains a single data point, i.e., $|\mathcal{X}|=1$.

\SetAlCapSkip{0.5em}
\IncMargin{0.8em}
\begin{algorithm2e}[!t]
\DontPrintSemicolon
    \textbf{Procedure: } PathLength($\b{x}$, $t$, $l$)\;
    \textbf{Input: } $\b{x}$: data point, $t$: iMondrian tree, $l$: current path length (initialized to $0$) \;
    $q \leftarrow t$.splitDim\;
    $p \leftarrow t$.splitVal\;
    \uIf{$\b{x}(q) < p$}{
        \textbf{Return} PathLength($\b{x}$, $t$.leftChild, $l+1$)\;
    }
    \Else{
        \textbf{Return} PathLength($\b{x}$, $t$.rightChild, $l+1$)\;
    }
\caption{Calculation of path length.}\label{algorithm_path_length}
\end{algorithm2e}
\DecMargin{0.8em}

\SetAlCapSkip{0.5em}
\IncMargin{0.8em}
\begin{algorithm2e}[!t]
\DontPrintSemicolon
    \textbf{Procedure: } ExtendIMondrianForest($\mathcal{X}^{(n)}$, $\mathcal{F}$)\;
    \textbf{Input: } $\mathcal{X}^{(n)} = \{\b{x}_i^{(n)}\}_{i=1}^m$: new data, $\mathcal{F}$: Forest \;
    \For{$\b{x}_i^{(n)} \in \mathcal{X}^{(n)}$}{
        \For{tree $t \in \mathcal{F}$}{
            ExtendIMondrianTree($t$.root, $\b{x}_i^{(n)}, 0$)\;
        }
    }
\caption{Extension of iMondrian forest.}\label{algorithm_extend_iMondrian_forest}
\end{algorithm2e}
\DecMargin{0.8em}

\SetAlCapSkip{0.5em}
\IncMargin{0.8em}
\begin{algorithm2e*}[!t]
\DontPrintSemicolon
    \textbf{Procedure: } ExtendIMondrianTree($r, \b{x}^{(n)}, \tau_\text{parent}$)\;
    \textbf{Input: } $r$: node pointer, new data point: $\b{x}^{(n)}$, $\tau_\text{parent}$: split time of the parent node\;
    $\b{e}_\ell \leftarrow \text{max}(r.\text{dim}_\text{min} - \b{x}^{(n)}, \b{0})$\;
    $\b{e}_u \leftarrow \text{max}(\b{x}^{(n)} - r.\text{dim}_\text{max}, \b{0})$\;
    $e \sim \text{Exp}\big(\lambda = \sum_{j=1}^d (\b{e}_\ell(j) + \b{e}_u(j))\big)$\;
    \uIf{$\tau_\text{parent} + e < r.\tau$}{
        $q \leftarrow $ sample from $\{1, \dots, d\}$ with distribution $\propto (\b{e}_\ell(j) + \b{e}_u(j))$ for the $j$-th dimension\;
        \uIf{$\b{x}^{(n)}(q) > r.\text{dim}_\text{max}(q)$}{
            $p \sim U\big(r.\text{dim}_\text{max}(q), \b{x}^{(n)}(q)\big)$\;
        }
        \ElseIf{$\b{x}^{(n)}(q) < r.\text{dim}_\text{min}(q)$}{
            $p \sim U\big(\b{x}^{(n)}(q), r.\text{dim}_\text{min}(q)\big)$\;
        }
        newNode $\leftarrow$ internalNode\{splitDim: $q$, splitVal: $p$, time: $\tau_\text{parent} + e$, $\text{dim}_\text{min}$: min$(r$.$\text{dim}_\text{min}, \b{x}^{(n)})$, $\text{dim}_\text{max}$: max$(r$.$\text{dim}_\text{max}, \b{x}^{(n)})$, population: $r$.population + 1\}\;
        newNode.parent $\leftarrow$ $r$.parent\;
        \uIf{$\b{x}^{(n)}(q) > p$}{
            newNode.leftChild $\leftarrow r$\;
            newNode.rightChild $\leftarrow$ iMondrianTree (rightSibling($r$), $\b{x}^{(n)}$, newNode.time)\;
        }
        \Else{
            newNode.leftChild $\leftarrow$ iMondrianTree (leftSibling($r$), $\b{x}^{(n)}$, newNode.time)\;
            newNode.rightChild $\leftarrow r$\;
        }
    }
    \Else{
        $r.\text{dim}_\text{min} \leftarrow \text{min}(r.\text{dim}_\text{min}, \b{x}^{(n)})$\;
        $r.\text{dim}_\text{max} \leftarrow \text{max}(r.\text{dim}_\text{max}, \b{x}^{(n)})$\;
        \uIf{$\b{x}(r.\text{splitDim}) \leq r.\text{splitVal}$}{
            ExtendIMondrianTree($r.\text{leftChild}, \b{x}^{(n)}, r.\tau$)\;
        }
        \Else{
            ExtendIMondrianTree($r.\text{rightChild}, \b{x}^{(n)},$ $r.\tau$)\;
        }
    }
\caption{Extension of iMondrian tree.}\label{algorithm_extend_iMondrian_tree}
\end{algorithm2e*}
\DecMargin{0.8em}

\subsubsection{Evaluation}
After growing the iMondrian trees in the forest, we calculate the path length of every tree for a data point $\b{x}$ as in Algorithm \ref{algorithm_path_length}. The path length for the $t$-th tree, $l_t(\b{x})$, is the number of edges traversed by the point from the root to the node containing point $\b{x}$. 
We calculate the expected path length in the iMondrian forest using Eq. (\ref{equation_expected_path_length}) and the anomaly score for point $\b{x}$ using Eq. (\ref{equation_anomaly_score}).
For determining whether a point in the dataset is normal or an anomaly, we can either use the threshold $s=0.5$ as in \cite{liu2008isolation} or K-means clustering. In the threshold approach, the point is determined as anomaly if $s(\b{x}) > 0.5$. In the K-means approach, we assign the scores of training data into two clusters and take the points in the cluster with greater mean as the anomaly points. 
The theoretical reason for threshold $0.5$ is that the expected path length for the data point (Eq. (\ref{equation_expected_path_length})) is the estimation of the average path length (Eq. (\ref{equation_c})) when $s=0.5$ (see {\cite[p. 415]{liu2008isolation}}, same holds for us). The empirical reason is that the results of $s=0.5$ and K-means are almost the same (see Fig \ref{figure_synthetic_batch}). 

In batch processing, for an out-of-sample data point, or novelty detection \cite{pimentel2014review}, we feed the data point to the trees of iMondrian forest and calculate the score using Eq. (\ref{equation_anomaly_score}). Then, we can use the threshold $s=0.5$ again or assign the point to the cluster whose mean is closer to the score of the point. 
Our experiments showed that both the threshold and clustering approaches have almost equally good performance for batch processing.

\subsection{Online Processing}

\subsubsection{Training}

A major advantage of iMondrian forests is their ability to be updated online for new data. Let $\mathcal{X}^{(n)} := \{\b{x}_i^{(n)}\}_{i=1}^m$ denote the $m$ new data points. We process data points one-by-one to extend each tree in the forest (see Algorithm \ref{algorithm_extend_iMondrian_forest}). 
Algorithm \ref{algorithm_extend_iMondrian_tree} describes how we extend each iMondrian tree for $\b{x}^{n}$. The tree is extended recursively starting from the root. The lower and upper errors of deviation of a point from the smallest block contained by the node $r$ are calculated as $\mathbb{R}^d \ni \b{e}_\ell := \text{max}(r.\text{dim}_\text{min} - \b{x}^{(n)}, \b{0})$ and $\mathbb{R}^d \ni \b{e}_u := \text{max}(\b{x}^{(n)} - r.\text{dim}_\text{max}, \b{0})$, respectively, where $\text{dim}_\text{min}$ and $\text{dim}_\text{max}$ are the upper and lower bounds of the block along different dimensions. We sample a random variable $e$ from an exponential distribution with the rate $\lambda = \sum_{j=1}^d (\b{e}_\ell(j) + \b{e}_u(j))$. 

In the case where the split time of the node $r$ is greater than the split time of its parent plus $e$, a new node is created above the node $r$. Note that we started from the root and are moving downwards so the new node is added before the current node for which a condition holds. 
In this case, we randomly pick a split dimension $q$ from the distribution proportional to $(\b{e}_\ell(j) + \b{e}_u(j))$. 
If the value on dimension $q$ of the data point is \textit{greater} than the upper bound of the current block, then the split value $p$ is sampled from the uniform distribution $U\big(r.\text{dim}_\text{max}(q), \b{x}^{(n)}(q)\big)$.
If the value is \textit{lower}, then $p$ is sampled from $U\big(\b{x}^{(n)}(q), r.\text{dim}_\text{min}(q)\big)$.
Depending on the split value and the feature of data point, we create an iMondrian tree as the left or right sibling of the node $r$.

In the case where the split time of the node $r$ is less than the split time of its parent plus $e$, we simply descend down the tree and call the extending function recursively for the left or right of the node $r$ depending on the split dimension and split values of the children.

\subsubsection{Evaluation}

After the extension of the trees of iMondrian forest, we can process data points through the forest to calculate their anomaly scores using Eq. (\ref{equation_anomaly_score}). 
This can be done for all the new points and any other out-of-sample points. 
Whenever the trees have been updated we should also ideally process previous batches of data through the forest again to recalculate their anomaly scores. This is expected since more data will lead to an improved model and a better structure for detection of false negative or positive points. However, for performance reasons, in practice this recalculation of scores could be done for just a window of the latest points. 
For online processing, our experiments showed that the threshold $s=0.5$ is not necessarily the best threshold and K-means clustering works more better. Hence, we use K-means to cluster all the into two clusters and set the cluster with greater mean as anomalous. The out-of-sample points are assigned to the cluster whose mean is closer to their score.

\section{Complexity Analysis}

\begin{proposition}
The training and evaluation (score calculation) phases of batch processing in iMondrian forest each take $\mathcal{O}(|\mathcal{F}|\, d\, n \log n)$ and $\mathcal{O}(|\mathcal{F}|\, n \log n)$ time, respectively, assuming that the trees are balanced.
\end{proposition}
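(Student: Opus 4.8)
The plan is to bound the total cost as $|\mathcal{F}|$ times the cost of processing a single tree, and to analyze one tree by a level-by-level summation over its nodes rather than by a naive node-count argument. For both phases the balance assumption enters only to cap the tree depth at $\mathcal{O}(\log n)$; everything else follows from how much work Algorithms \ref{algorithm_iMondrian_tree} and \ref{algorithm_path_length} perform at each visited node, which I would read off directly from the pseudocode.

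For the training phase I would first determine the per-node cost in Algorithm \ref{algorithm_iMondrian_tree}. At a node holding block subset $\mathcal{X}_b$, computing $\b{\ell}_{\mathcal{X}_b}$ and $\b{u}_{\mathcal{X}_b}$ is a coordinate-wise min/max over $|\mathcal{X}_b|$ points in $d$ dimensions, costing $\mathcal{O}(d\,|\mathcal{X}_b|)$; sampling $e$, the split dimension $q$, and the split value $p$ each touch the $d$ coordinates a constant number of times, costing $\mathcal{O}(d)$; and partitioning $\mathcal{X}_b$ into $\mathcal{X}_\text{left}$ and $\mathcal{X}_\text{right}$ by comparing $\b{x}(q)$ to $p$ for each point costs $\mathcal{O}(|\mathcal{X}_b|)$. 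Hence the dominant per-node cost is $\mathcal{O}(d\,|\mathcal{X}_b|)$. The key step is the summation: since every split partitions a node's points disjointly between its two children, the blocks at any fixed depth hold disjoint subsets of the original $n$ points, so $\sum_{r \text{ at depth } k} |\mathcal{X}_b^{(r)}| \le n$. Summing the per-node cost over one level therefore gives $\mathcal{O}(d\,n)$, and the balance assumption supplies $\mathcal{O}(\log n)$ levels, yielding $\mathcal{O}(d\,n \log n)$ per tree and $\mathcal{O}(|\mathcal{F}|\,d\,n \log n)$ for the forest.

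For the evaluation phase I would observe that Algorithm \ref{algorithm_path_length} performs only a single scalar comparison $\b{x}(q) < p$ at each node before descending to one child, with no min/max recomputation and no dependence on the block size. Thus the cost is $\mathcal{O}(1)$ per visited node, and the root-to-leaf path for $\b{x}$ visits $\mathcal{O}(\log n)$ nodes under the balance assumption. Scoring one point against one tree is therefore $\mathcal{O}(\log n)$, and scoring all $n$ points across all $|\mathcal{F}|$ trees, including the aggregation in Eq. (\ref{equation_expected_path_length}) and the exponentiation in Eq. (\ref{equation_anomaly_score}) that are $\mathcal{O}(1)$ per point per tree, gives $\mathcal{O}(|\mathcal{F}|\,n \log n)$. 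I would explicitly flag that the factor $d$ is absent here precisely because traversal reads one coordinate per node rather than all $d$.

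The main obstacle is the summation step in the training bound: a careless estimate that multiplies the worst-case per-node cost $\mathcal{O}(d\,n)$ by the number of nodes $\mathcal{O}(n)$ overcounts and produces a spurious $\mathcal{O}(d\,n^2)$ bound. Avoiding this requires the disjointness-per-level observation together with the balance hypothesis to telescope the block sizes correctly. I would state carefully that without the balance assumption the depth can degrade to $\mathcal{O}(n)$, so the level count is no longer $\mathcal{O}(\log n)$ and the claimed bounds fail; this is exactly why the hypothesis appears in the statement.
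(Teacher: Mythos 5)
Your proposal is correct and takes essentially the same approach as the paper's proof: the paper likewise charges $\mathcal{O}(d\,n)$ for the $\b{\ell}_{\mathcal{X}_b}$, $\b{u}_{\mathcal{X}_b}$ computations at every depth (implicitly using the level-wise disjointness of blocks that you make explicit), multiplies by the $\mathcal{O}(\log n)$ depth of a balanced tree and by $|\mathcal{F}|$, and bounds evaluation by the same constant-work-per-edge traversal argument. Your per-node cost accounting and explicit disjointness observation simply make rigorous what the paper states in a single line.
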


\begin{proof}
Assuming that the tree is balanced, in training, the $i$-th point traverses $\log n$ edges to reach its leaf. 
For $n$ points, we have $\mathcal{O}(\sum_{i=1}^n \log n) = \mathcal{O}(\log (n^n)) = \mathcal{O}(n \log n)$ \cite{louppe2014understanding}.
Moreover, computation of $\b{\ell}_{\mathcal{X}_b}$ and $\b{u}_{\mathcal{X}_b}$ take $\mathcal{O}(d\, n)$ time at every depth of tree, resulting in $\mathcal{O}(d\, n \log n)$ in a tree.
The complexity of a forest with $|\mathcal{F}|$ trees is, therefore, $\mathcal{O}(|\mathcal{F}|\, d\, n \log n)$.
In evaluation, every point traverses at most $\mathcal{O}(\log n)$ edges. For the $n$ points and the whole forest, it becomes $\mathcal{O}(|\mathcal{F}|\, n \log n)$. Note that if we use subsampling with size $\psi$, the complexity of training and evaluation phases become $\mathcal{O}(|\mathcal{F}|\, d\, \psi \log \psi)$ and $\mathcal{O}(|\mathcal{F}|\, n \log \psi)$, respectively. 
Also, note that since the trees are independent of one another, trees can be computed in parallel.
\end{proof}

\begin{proposition}
After construction of forest by the initial batch, the training and evaluation (score calculation) phases of online processing in iMondrian forest each take $\mathcal{O}(|\mathcal{F}|\, d\, (n+m) \log (n+m))$ and $\mathcal{O}(|\mathcal{F}|\, (n+m) \log (n+m))$ time, respectively, assuming that the trees are balanced.
\end{proposition}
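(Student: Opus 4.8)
The plan is to mirror the argument of the preceding proposition, with the effective sample size $n$ replaced by $n+m$, since after the online phase each tree holds all $n+m$ points. First I would record the one structural fact that governs both phases: under the balanced-tree assumption, a tree containing up to $n+m$ points has height $\mathcal{O}(\log(n+m))$.

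For the training (extension) phase, I would trace a single invocation of Algorithm \ref{algorithm_extend_iMondrian_tree} on one new point $\b{x}^{(n)}$ in one tree. At each visited node the algorithm forms the lower and upper errors $\b{e}_\ell,\b{e}_u$ and, in the descending branch, updates $\text{dim}_\text{min},\text{dim}_\text{max}$; each of these is a coordinate-wise operation over $d$ dimensions, costing $\mathcal{O}(d)$. The recursion either descends to a child or creates one new internal node above $r$ and attaches the incoming point as a single-point leaf via iMondrianTree; in the latter case the recursion terminates at once, so the number of nodes touched is bounded by the root-to-insertion path length, i.e. $\mathcal{O}(\log(n+m))$. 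Thus one insertion costs $\mathcal{O}(d\log(n+m))$. Summing over the $m$ new points and the $|\mathcal{F}|$ trees gives $\mathcal{O}(|\mathcal{F}|\,d\,m\log(n+m))$, and bounding $m \le n+m$ yields the stated training bound.

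For the evaluation phase, I would reuse the batch-evaluation reasoning but apply it to the full set of $n+m$ points, since an online update can change the scores of previously seen points, and in the worst case all of them must be re-scored. Each point is routed by Algorithm \ref{algorithm_path_length}, which performs $\mathcal{O}(1)$ work per edge and traverses at most $\mathcal{O}(\log(n+m))$ edges in a balanced tree of $n+m$ points. Summing over the $n+m$ points and the $|\mathcal{F}|$ trees gives $\mathcal{O}(|\mathcal{F}|\,(n+m)\log(n+m))$, as claimed.

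The step I expect to be the main obstacle is justifying the per-point training cost when a node is inserted in the interior of the tree rather than appended at a leaf, since this mid-tree insertion is exactly the feature distinguishing Mondrian-style extension from ordinary online forests. The observation that renders it routine is that Algorithm \ref{algorithm_extend_iMondrian_tree} never re-descends after such an insertion: it builds a single new internal node with $\mathcal{O}(d)$ bookkeeping and grows only a one-point leaf, so the work per point remains bounded by the tree height times $\mathcal{O}(d)$, precisely as in the batch case.
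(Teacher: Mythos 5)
Your proposal is correct and follows essentially the same route as the paper: per-point insertion cost bounded by tree height times the $\mathcal{O}(d)$ coordinate-wise work, summed over the $m$ points and $|\mathcal{F}|$ trees, and re-scoring all $n+m$ points at $\mathcal{O}(\log(n+m))$ each for evaluation. The only (harmless) difference is bookkeeping: the paper sums $\log(n+1)+\dots+\log(n+m)$ and invokes Stirling's approximation, whereas you bound each insertion directly by $\log(n+m)$ to get the tighter intermediate bound $\mathcal{O}(|\mathcal{F}|\,d\,m\log(n+m))$ and then relax $m \le n+m$ --- a slightly cleaner path to the same conclusion, and your explicit observation that the mid-tree insertion terminates the recursion immediately makes precise a step the paper leaves implicit.
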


\begin{proof}
In training, the $i$-th new point traverses at most $\log (n+i)$ edges in the worst case because every new point adds either an internal or leaf node to the tree. The time for the $m$ new points is $\mathcal{O}(\log(n+1) + \dots + \log(n+m)) = \mathcal{O}(\log ((n+m)!) - \log (n!)) \approx \mathcal{O}((n+m) \log (n+m)) - \mathcal{O}(n \log n) = \mathcal{O}((n+m) \log (n+m))$ where approximation is because of Stirling's approximation for the factorial function. 
Also, calculation of $\b{e}_\ell$ and $\b{e}_u$ take $\mathcal{O}(d)$ time for each new point at every node. 
The time for the whole forest is $\mathcal{O}(|\mathcal{F}|\, d\, (n+m) \log (n+m))$; although, the trees can be processed in parallel. 
In evaluation, the tree includes $(n+m)$ leaves so every point traverses at most $\mathcal{O}(\log(n+m))$ edges. Having all the trees and the $n+m$ points, including the re-evaluation of previous batch, takes $\mathcal{O}(|\mathcal{F}|\, (n+m) \log (n+m))$ time.
\end{proof}

\begin{figure*}[!t]
\centering
\includegraphics[width=5.5in]{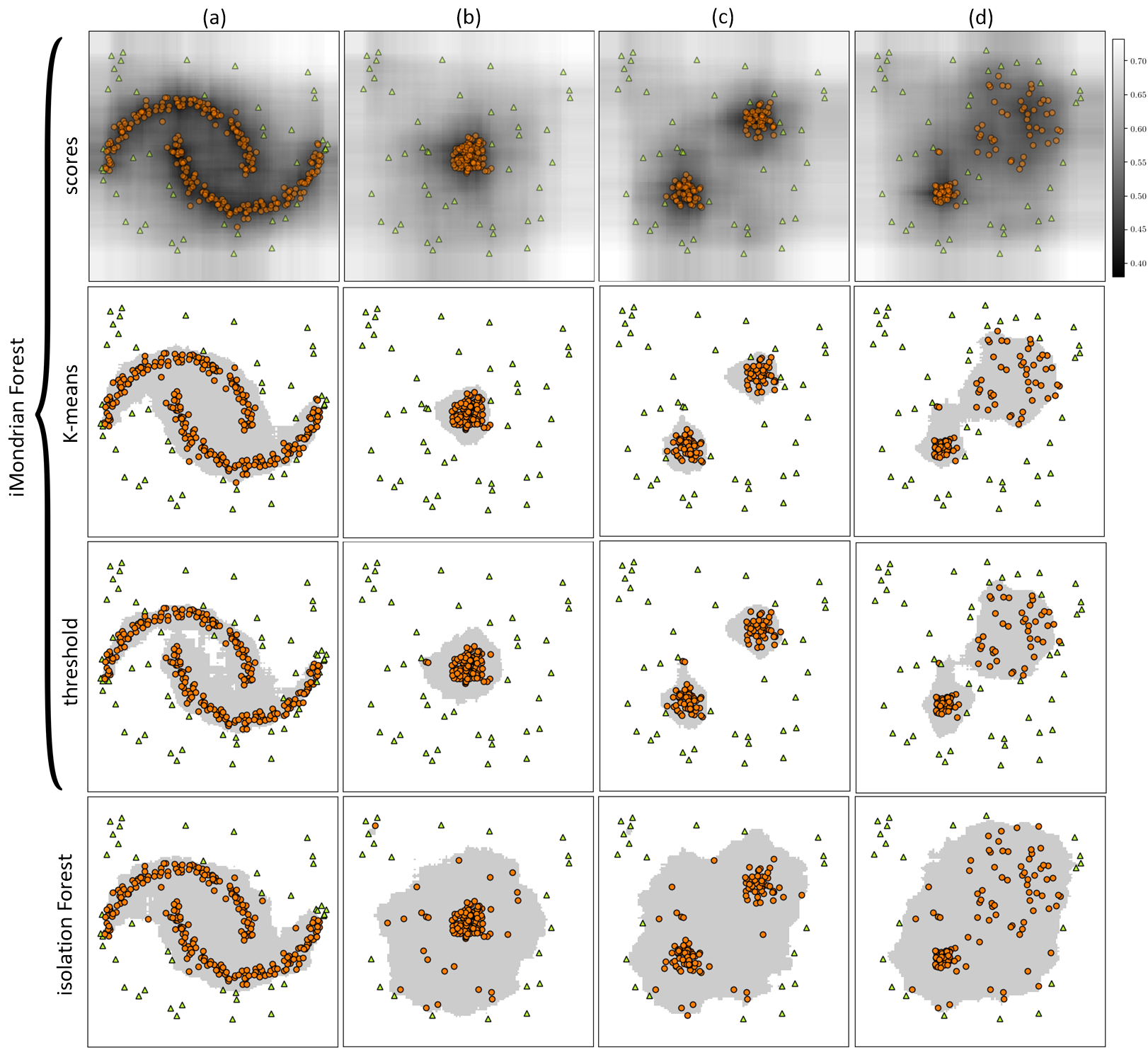}
\caption{Comparison of batch anomaly detection in iMondrian and iForests for the synthetic datasets (a)-(d). The orange circles and green triangles correspond to the detected normal and anomalous points, respectively, while shaded gray regions show the partition of space detected as normal.}
\label{figure_synthetic_batch}
\end{figure*}

\section{Experiments}

\subsection{Synthetic Data}

We created four two-dimensional synthetic datasets (a)-(d) as can be seen in Fig. \ref{figure_synthetic_batch}. The datasets include a variety of edge-case distributions of random anomalous points. 
Dataset (a) has $255$ and the rest of datasets have $100$ inliers while the number of outliers are $45$. 

\begin{figure*}[!t]
\centering
\includegraphics[width=6in]{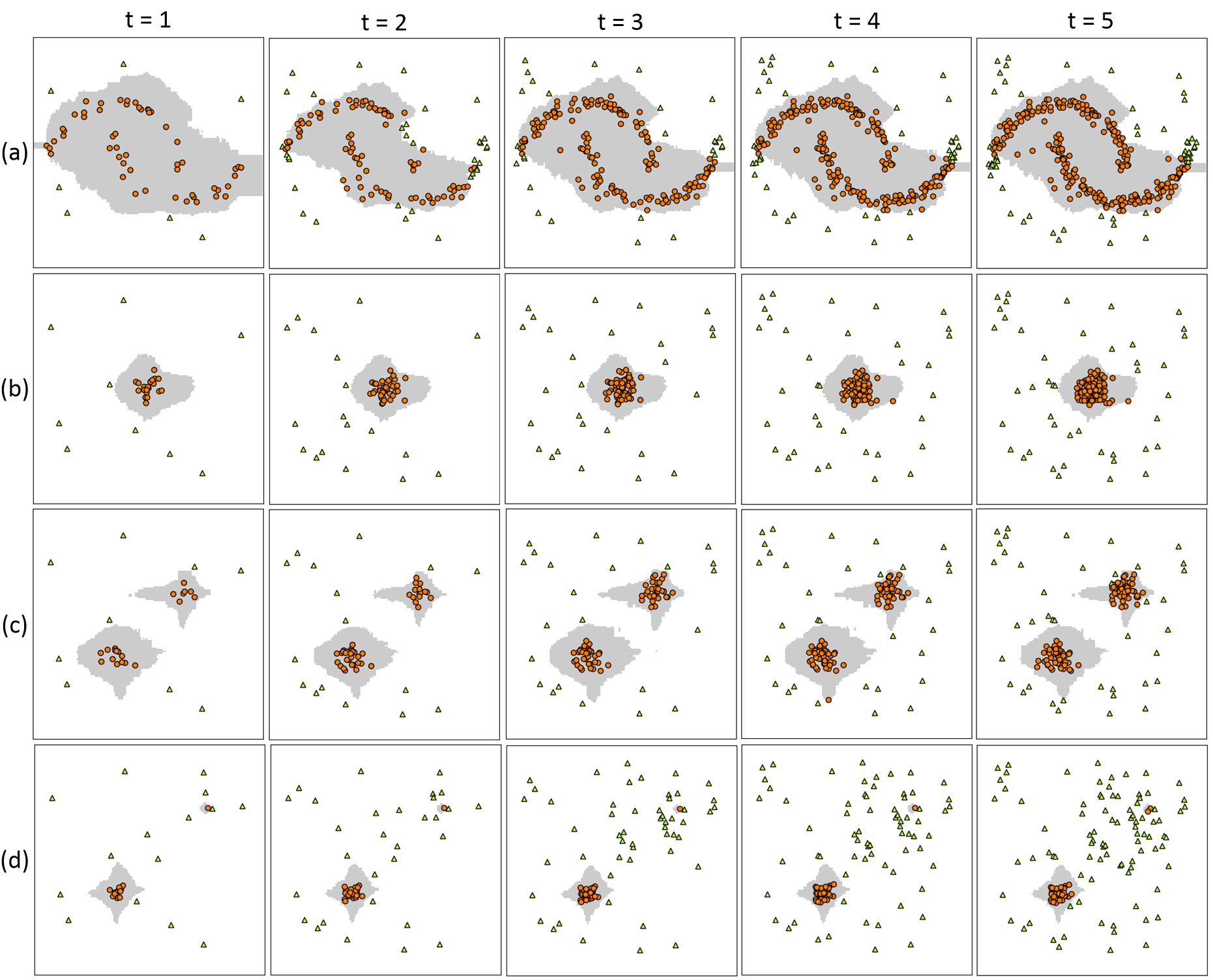}
\caption{Online anomaly detection in iMondrian forest for the synthetic datasets (a)-(d). The orange circles and green triangles correspond to the detected normal and anomalous points, respectively, while shaded gray regions show the partition of space detected as normal. Time steps are denoted by $t$ in this figure.}
\label{figure_synthetic_online}
\end{figure*}

\subsubsection{Batch Experiments}

The results of batch processing in iMondrian forest on the synthetic datasets are illustrated in Fig. \ref{figure_synthetic_batch}. The anomaly scores are shown for the input space of data. As expected, the scores are higher for anomalous points of space, which are far away from the core of distribution. 
Fig. \ref{figure_synthetic_batch} shows the results of both K-means clustering and thresholding (with threshold $s=0.5$) which perform almost equally well. 
We also show the result of iForest (with threshold $s=0.5$) for the sake of comparison. iMondrian forest clearly performed much better than iForest due to having much fewer false negatives. It is because iMondrian trees take into account the smallest blocks containing the points within a node while iForest considers the whole block.

\begin{table*}[!t]
\caption{Characteristics of utilized datasets for experiments.}
\label{table_datasets}
\begin{minipage}{\textwidth}
\renewcommand{\arraystretch}{1.3}  %%% each row size
\centering
\scalebox{0.9}{    %%% --> for resizing tables
\begin{tabular}{l | c | c | c | c | c | c | c | c | c}
\hline
\hline
 & WBC & Pima & Thyroid & Satellite & Optdigits & Ionosphere & Wine & SMTP & CICIDS \\
\hline
$\#$Instances & 278 & 768 & 3772 & 6435 & 5216 & 351 & 129 & 95156 & 691406 \\
\hline
$\#$Features & 30 & 8 & 6 & 36 & 64 & 33 & 13 & 3 & 77 \\
\hline
$\%$ anomalies & 37\% & 35\% & 2.5\% & 32\% & 3\% & 36\% & 7.7\% & 0.03\% & 36\% \\
\hline
\hline
\end{tabular}%
}
\end{minipage}
\end{table*}

\subsubsection{Online Experiments}

We divided every dataset, using stratified sampling, into five subsets with equal amounts of outliers. We used these subsets to simulate streaming data by adding each subset to the existing data in a succession of five steps. 
Fig. \ref{figure_synthetic_online} shows the results of online processing using iMondrian forests on the synthetic datasets. 
K-means clustering was used in all of these experiments. In set (a), we see that in the second step, some inliers are falsely recognized as anomalous; however, by receiving more data in the next steps, the structures of iMondrian trees have been modified correctly and those points are recognized correctly as inliers. 
For set (c), merely some core points of the larger blob are detected as normal. This is because in the initial steps, there happen to be far fewer points from the smaller blob so the algorithm has found that region to be anomalous; however, if that blob had become much denser in the further steps, they would be detected as normal.
Overall, we see that for the different datasets, the performance of online iMondrian forest is acceptable. 

\subsection{Real Data}

We selected eight varied datasets with different characteristics from the outlier detection datasets \cite{web_anomaly_datasets} and one very large dataset, CICIDS 2017 \cite{sharafaldin2018toward,web_CICIDS_dataset}. 
In CICIDS data, we only used the data of Wednesday and excluded its one categorical feature.
Table \ref{table_datasets} reports the characteristics of these datasets. The datasets have different sample size, dimensionality, and percentage of outliers.

\begin{table*}[!t]
\caption{Comparison of batch anomaly detection methods. Rates are AUC percentage and times are in seconds averaged over the folds. The upward arrows in AUC rates mean iMForest outperforms the other methods.}
\label{table_experiments_anomaly_detection_batch}
\begin{minipage}{\textwidth}
\renewcommand{\arraystretch}{1.3}  %%% each row size
\centering
\scalebox{0.7}{    %%% --> for resizing tables
\begin{tabular}{l | l | l | c | c | c | c | c | c | c | c }
\hline
\hline
\multicolumn{3}{c|}{} & \textbf{WBC} & \textbf{Pima} & \textbf{Thyroid} & \textbf{Satellite} & \textbf{Optdigits} & \textbf{Ionosphere} & \textbf{Wine} & \textbf{SMTP} \\ 
\hline
\multirow{4}{*}{ \textbf{iMForest}} 
& \multirow{2}{*}{ \textbf{Train:}} 
& \textbf{Time:} & 2.40 $\pm$ 0.01 & 2.54 $\pm$ 0.04 & 4.96 $\pm$ 0.15 & 16.33 $\pm$ 0.18 & 5.28 $\pm$ 0.03  & 2.27 $\pm$ 0.02 & 0.48 $\pm$ 0.00 & 68.64 $\pm$ 1.11  \\
& & \textbf{AUC:} & 86.35 $\pm$ 1.31 & 63.63 $\pm$ 1.02 & 95.36 $\pm$ 0.41 & 73.93 $\pm$ 1.19 & 72.90 $\pm$ 3.49  & 86.07 $\pm$ 0.95 & 99.01 $\pm$ 0.16 & 86.76 $\pm$ 1.47 \\
\cline{2-11}
& \multirow{2}{*}{ \textbf{Test:}} 
& \textbf{Time:} & 0.04 $\pm$ 0.00 & 0.07 $\pm$ 0.01 & 0.33 $\pm$ 0.02 & 1.58 $\pm$ 0.02 & 0.35 $\pm$ 0.00  & 0.02 $\pm$ 0.00 & 0.02 $\pm$ 0.00 & 7.42 $\pm$ 0.11 \\
& & \textbf{AUC:} & 86.25 $\pm$ 5.02 & 63.74 $\pm$ 9.39 & 95.37 $\pm$ 1.62 & 73.67 $\pm$ 2.33 & 73.00 $\pm$ 7.64  & 83.99 $\pm$ 6.32 & 99.71 $\pm$ 0.28 & 85.12 $\pm$ 14.25  \\
\hline
\hline
\multirow{4}{*}{ \textbf{iForest}} 
& \multirow{2}{*}{ \textbf{Train:}} 
& \textbf{Time:} & 0.14 $\pm$ 0.00 & 0.14 $\pm$ 0.00 & 0.25 $\pm$ 0.01 & 0.46 $\pm$ 0.01 & 0.81 $\pm$ 0.01  & 0.12 $\pm$ 0.00 & 0.08 $\pm$ 0.00 & 4.41 $\pm$ 0.05  \\
& & \textbf{AUC:} & 78.75 $\pm$ 1.62 $\b{\uparrow}$ & 67.49 $\pm$ 1.36 & 97.89 $\pm$ 0.22 & 70.13 $\pm$ 2.13 $\b{\uparrow}$ & 68.38 $\pm$ 4.64 $\b{\uparrow}$ & 84.74 $\pm$ 0.94 $\b{\uparrow}$ & 79.56 $\pm$ 10.59 $\b{\uparrow}$ & 90.74 $\pm$ 1.37  \\
\cline{2-11}
& \multirow{2}{*}{ \textbf{Test:}} 
& \textbf{Time:} & 0.01 $\pm$ 0.00 & 0.01 $\pm$ 0.00 & 0.02 $\pm$ 0.00 & 0.03 $\pm$ 0.00 & 0.03 $\pm$ 0.00  & 0.01 $\pm$ 0.00 & 0.01 $\pm$ 0.00 & 0.19 $\pm$ 0.00  \\
& & \textbf{AUC:} & 78.81 $\pm$ 6.20 $\b{\uparrow}$ & 68.00 $\pm$ 5.14 & 97.87 $\pm$ 0.84 & 70.13 $\pm$ 3.46 $\b{\uparrow}$ & 68.36 $\pm$ 8.11 $\b{\uparrow}$ & 84.32 $\pm$ 6.19 & 76.09 $\pm$ 10.11 $\b{\uparrow}$ & 89.44 $\pm$ 10.02  \\
\hline
\hline
\multirow{4}{*}{ \textbf{LOF}} 
& \multirow{2}{*}{ \textbf{Train:}} 
& \textbf{Time:} & 0.01 $\pm$ 0.00 & 0.01 $\pm$ 0.00 & 0.04 $\pm$ 0.00 & 0.65 $\pm$ 0.00 & 1.84 $\pm$ 0.05  & 0.01 $\pm$ 0.00 & 0.01 $\pm$ 0.00 & 1.05 $\pm$ 0.06  \\
& & \textbf{AUC:} & 61.12 $\pm$ 1.56 $\b{\uparrow}$ & 49.91 $\pm$ 1.41 $\b{\uparrow}$ & 70.26 $\pm$ 2.10 $\b{\uparrow}$ & 52.65 $\pm$ 0.33 $\b{\uparrow}$ & 60.84 $\pm$ 1.67 $\b{\uparrow}$ & 89.59 $\pm$ 0.81 & 98.70 $\pm$ 1.29 $\b{\uparrow}$ & 53.51 $\pm$ 7.25 $\b{\uparrow}$  \\
\cline{2-11}
& \multirow{2}{*}{ \textbf{Test:}} 
& \textbf{Time:} & 0.01 $\pm$ 0.00 & 0.01 $\pm$ 0.00 & 0.05 $\pm$ 0.00 & 0.77 $\pm$ 0.01 & 2.13 $\pm$ 0.08  & 0.01 $\pm$ 0.00 & 0.01 $\pm$ 0.00 & 1.14 $\pm$ 0.04  \\
& & \textbf{AUC:} & 61.94 $\pm$ 7.56 $\b{\uparrow}$ & 51.36 $\pm$ 7.50 $\b{\uparrow}$ & 66.17 $\pm$ 13.06 $\b{\uparrow}$ & 53.26 $\pm$ 2.32 $\b{\uparrow}$ & 61.12 $\pm$ 11.65 $\b{\uparrow}$ & 89.87 $\pm$ 7.23 & 92.58 $\pm$ 5.69 $\b{\uparrow}$ & 56.23 $\pm$ 25.90 $\b{\uparrow}$ \\
\hline
\hline
\multirow{4}{*}{ \textbf{SVM}} 
& \multirow{2}{*}{ \textbf{Train:}} 
& \textbf{Time:} & 0.03 $\pm$ 0.00 & 0.04 $\pm$ 0.00 & 0.27 $\pm$ 0.00 & 3.78 $\pm$ 0.00 & 3.10 $\pm$ 0.01  & 0.01 $\pm$ 0.00 & 0.01 $\pm$ 0.00 & 240.93 $\pm$ 3.81   \\
& & \textbf{AUC:} & 49.40 $\pm$ 3.16 $\b{\uparrow}$ & 51.93 $\pm$ 0.02 $\b{\uparrow}$ & 84.36 $\pm$ 0.53 $\b{\uparrow}$ & 48.54 $\pm$ 0.57 $\b{\uparrow}$ & 50.52 $\pm$ 3.81 $\b{\uparrow}$ & 76.23 $\pm$ 0.86 $\b{\uparrow}$ & 68.59 $\pm$ 4.25 $\b{\uparrow}$ & 84.14 $\pm$ 1.75 $\b{\uparrow}$  \\
\cline{2-11}
& \multirow{2}{*}{ \textbf{Test:}} 
& \textbf{Time:} & 0.01 $\pm$ 0.00 & 0.01 $\pm$ 0.00 & 0.01 $\pm$ 0.00 & 0.18 $\pm$ 0.00 & 0.15 $\pm$ 0.00  & 0.01 $\pm$ 0.00 & 0.01 $\pm$ 0.00 & 4.89 $\pm$ 0.09  \\
& & \textbf{AUC:} & 94.21 $\pm$ 1.35 & 60.01 $\pm$ 8.37 $\b{\uparrow}$ & 84.49 $\pm$ 4.12 $\b{\uparrow}$ & 64.04 $\pm$ 1.65 $\b{\uparrow}$ & 37.49 $\pm$ 7.41 $\b{\uparrow}$ & 76.61 $\pm$ 7.90 $\b{\uparrow}$ & 91.13 $\pm$ 3.83 $\b{\uparrow}$ & 83.06 $\pm$ 17.75 $\b{\uparrow}$ \\
\hline
\hline
\end{tabular}%
}
\end{minipage}
\end{table*}

\begin{table*}[!t]
\caption{Comparison of online anomaly detection methods. Rates are AUC percentage and times are in seconds. Upward arrows mean better performance of iMForest.}
\label{table_experiments_anomaly_detection_online}
%\begin{minipage}{\textwidth}
\renewcommand{\arraystretch}{1.3}  %%% each row size
\centering
\scalebox{0.63}{    %%% --> for resizing tables
\begin{tabular}{l | c | l | c | c | c | c | c | c | c || l | c | l | c | c | c | c | c | c }
\hline
\hline
& \textbf{Stages} &  \textbf{Pima} & \textbf{Thyroid} & \textbf{Satellite} & \textbf{Optdigits} & \textbf{Ionosphere} & \textbf{Wine} & \textbf{SMTP} & \textbf{CICIDS} & & \textbf{Stages} &  \textbf{Pima} & \textbf{Thyroid} & \textbf{Satellite} & \textbf{Optdigits} & \textbf{Ionosphere} & \textbf{Wine} & \textbf{SMTP} \\ 
\hline
\multirow{10}{*}{ \rotatebox[origin=c]{90}{\textbf{iMForest}} } 
& \textbf{Time:} & 1.25  &  6.59 &  13.24  &  9.85  &  0.56  &  0.21  &  185.33  & 2.6E3 &
\multirow{10}{*}{ \rotatebox[origin=c]{90}{\textbf{osPCA1}} } & \textbf{Time:} &  1.50 & 9.02  &  17.72  &  15.67  &  0.95  &  0.25  &  252.87 \\
& \textbf{AUC:} & 70.19  &  95.41 &  71.94  &  67.50  &  86.62  &  95.65  &  95.48  & 71.02 &
& \textbf{AUC:} & 80.37  & 40.42 $\b{\uparrow}$ &  26.48 $\b{\uparrow}$ &  54.66 $\b{\uparrow}$ &  69.95 $\b{\uparrow}$ &  86.95 $\b{\uparrow}$ &  10.52 $\b{\uparrow}$ \\
\cline{2-10}\cline{12-19}
& \textbf{Time:} & 1.48  &  8.40 &   15.26 & 11.88  &  0.65  &  0.22  &  364.03  & 1.0E4 &
& \textbf{Time:} & 1.58  & 9.24  &  16.17  &  16.49  &  1.20  &  0.24  &  252.88 \\
& \textbf{AUC:} & 68.07  &  94.27 &  73.73  &  68.00  &  85.07  &  98.91  &  95.01  &  70.95 &
& \textbf{AUC:} & 75.10  & 45.96 $\b{\uparrow}$ &  42.43 $\b{\uparrow}$ &  57.19 $\b{\uparrow}$ &  61.24 $\b{\uparrow}$ &  60.86 $\b{\uparrow}$ &  17.25 $\b{\uparrow}$ \\
\cline{2-10}\cline{12-19}
& \textbf{Time:} & 1.59  &  9.20 &  16.65  & 12.93  &  0.69  &  0.23  &  319.45 &  5.9E3 &
& \textbf{Time:} & 1.85  & 9.62  &  17.01  &  18.57  &  0.90  &  0.24  &  282.63 \\
& \textbf{AUC:} & 65.45  &  94.65 &  74.15  &  66.70  &  84.27  &  98.79  &  96.58  & 70.76 &
& \textbf{AUC:} & 73.26  & 53.49 $\b{\uparrow}$ &  45.65 $\b{\uparrow}$ &  54.24 $\b{\uparrow}$ &  53.67 $\b{\uparrow}$ &  62.31 $\b{\uparrow}$ &  11.41 $\b{\uparrow}$ \\
\cline{2-10}\cline{12-19}
& \textbf{Time:} &  1.72 &  10.32 &  18.87  & 14.45  &  0.73  &  0.24  &  349.33 & 7.3E3 &
& \textbf{Time:} & 1.85  & 9.33  &  18.49  &  19.32  &  0.87  &  0.25  &  302.31 \\
& \textbf{AUC:} & 64.51  &  94.58 &  74.00  &  66.40  &  83.10  &  98.64  &  93.84  & 70.80 &
& \textbf{AUC:} & 72.00  & 52.91 $\b{\uparrow}$ &  47.32 $\b{\uparrow}$ &  51.65 $\b{\uparrow}$ &  50.92 $\b{\uparrow}$ &  67.11 $\b{\uparrow}$ &  18.37 $\b{\uparrow}$ \\
\cline{2-10}\cline{12-19}
& \textbf{Time:} & 1.88  &  11.29 &  20.67  &  15.29  &  0.79  &  0.29  &  393.93  & 8.6E3 & 
& \textbf{Time:} & 1.95  & 9.54  &  19.59  &  20.16  &  0.89  &  0.28  &  321.40 \\
& \textbf{AUC:} & 65.50  &  94.64 &  73.40  &  67.10  &  82.80  &  97.60  &  92.87 & 70.83 &
& \textbf{AUC:} & 71.34  & 55.80 $\b{\uparrow}$ &  48.08 $\b{\uparrow}$ &  51.09 $\b{\uparrow}$ &  49.74 $\b{\uparrow}$ &  72.35 $\b{\uparrow}$ &  24.72 $\b{\uparrow}$ \\
\hline
\hline
\multirow{10}{*}{ \rotatebox[origin=c]{90}{\textbf{Incremental LOF}} } 
& \textbf{Time:} &  0.001 & 0.006  &  0.04  &  0.06  &  0.001  &  0.0009  &  0.71 & 4.1E2 &
\multirow{10}{*}{ \rotatebox[origin=c]{90}{\textbf{osPCA2}} } & \textbf{Time:} &  0.15  & 3.51  &  14.58  &  11.32  &  0.12  &  0.01  &  2101.3 \\
& \textbf{AUC:} & 58.81 $\b{\uparrow}$  & 85.61 $\b{\uparrow}$ &  54.23 $\b{\uparrow}$ &  56.87 $\b{\uparrow}$ &  93.06  &  95.65  &  94.90 $\b{\uparrow}$ & 46.58 $\b{\uparrow}$ & 
& \textbf{AUC:} & 50.94 $\b{\uparrow}$ & 49.93 $\b{\uparrow}$ &  49.94 $\b{\uparrow}$ &  49.95 $\b{\uparrow}$ &  48.88 $\b{\uparrow}$ &  47.82 $\b{\uparrow}$ &  49.99 $\b{\uparrow}$\\
\cline{2-10}\cline{12-19}
& \textbf{Time:} & 0.001  & 0.02  &  0.14  & 0.26 &  0.001  & $\approx$ 0   &  1.87 & 1.5E3 &
& \textbf{Time:} & 2.04  & 17.35  &  42.41  &  33.80  &  0.95  &  0.25  &  5346.3 \\
& \textbf{AUC:} & 55.13 $\b{\uparrow}$ & 70.62 $\b{\uparrow}$ &  53.76 $\b{\uparrow}$ &  60.78 $\b{\uparrow}$ &  89.57  &  98.91  &  95.44 & 46.06 $\b{\uparrow}$ &
& \textbf{AUC:} & 56.10 $\b{\uparrow}$ & 57.00 $\b{\uparrow}$ &  54.03 $\b{\uparrow}$ &  46.14 $\b{\uparrow}$ &  57.02 $\b{\uparrow}$ &  58.69 $\b{\uparrow}$ &  58.03 $\b{\uparrow}$ \\
\cline{2-10}\cline{12-19}
& \textbf{Time:} & 0.001  & 0.04  &  0.29  & 0.58 &  0.003  & $\approx$ 0   &  3.93 & 3.3E3 &
& \textbf{Time:} & 2.32  & 25.88  &  68.78  &  54.53  &  1.06  & 0.26  &  9979 \\
& \textbf{AUC:} & 53.31 $\b{\uparrow}$ & 72.34 $\b{\uparrow}$ &  52.33 $\b{\uparrow}$ &  62.98 $\b{\uparrow}$ &  88.81  &  91.06 $\b{\uparrow}$ &  58.59 $\b{\uparrow}$ & 45.99 $\b{\uparrow}$ &
& \textbf{AUC:} & 59.42 $\b{\uparrow}$ & 64.59 $\b{\uparrow}$ &  56.33 $\b{\uparrow}$ &  40.87 $\b{\uparrow}$ &  61.58 $\b{\uparrow}$ &  67.63 $\b{\uparrow}$ &  68.00 $\b{\uparrow}$ \\
\cline{2-10}\cline{12-19}
& \textbf{Time:} & 0.003  & 0.07  &  0.52  &  1.06  &  0.003  &  $\approx$ 0  &  5.39 & 4.3E3 & 
& \textbf{Time:} & 2.81  & 33.53  &  69.17  &  73.20  &  1.07  &  0.34  &  14416 \\
& \textbf{AUC:} & 49.10 $\b{\uparrow}$ & 69.61 $\b{\uparrow}$ &  51.64 $\b{\uparrow}$ &  64.22 $\b{\uparrow}$ &  88.93  &  81.92 $\b{\uparrow}$ &  52.79 $\b{\uparrow}$ & 46.00 $\b{\uparrow}$ &
& \textbf{AUC:} & 60.02 $\b{\uparrow}$ & 64.78 $\b{\uparrow}$ &  57.23 $\b{\uparrow}$ &  38.97 $\b{\uparrow}$ &  62.84 $\b{\uparrow}$ &  72.14 $\b{\uparrow}$ &  69.13 $\b{\uparrow}$ \\
\cline{2-10}\cline{12-19}
& \textbf{Time:} & 0.005  & 0.11  & 0.79   &  1.68  &  0.004  &  $\approx$ 0  &  8.02 & 7.9E3 & 
& \textbf{Time:} & 3.21  & 42.30  &  122.71  &  93.13  &  1.18  &  0.40  &  18678 \\
& \textbf{AUC:} & 48.40 $\b{\uparrow}$ & 68.10 $\b{\uparrow}$ &  51.69 $\b{\uparrow}$ &  62.75 $\b{\uparrow}$ &  90.13  &  91.51 $\b{\uparrow}$ &  49.60 $\b{\uparrow}$ & 45.93 $\b{\uparrow}$ &
& \textbf{AUC:} & 61.05 $\b{\uparrow}$ & 67.19 $\b{\uparrow}$ &  57.36 $\b{\uparrow}$ &  36.60 $\b{\uparrow}$ &  64.51 $\b{\uparrow}$ &  75.04 $\b{\uparrow}$ &  68.82 $\b{\uparrow}$ \\
\hline
\hline
\end{tabular}%
}
%\end{minipage}
\end{table*}

\subsubsection{Batch Experiments}

We compared iMondrian forest with iForest, LOF (with $k=10$), and one-class SVM (with RBF kernel). 
The experiments were performed with $10$-fold cross validation except for the wine dataset where we used two folds due to small sample size.
The average Area Under ROC Curve (AUC) \cite{fawcett2006introduction}, which is a common measure in anomaly detection literature \cite{liu2008isolation,liu2012isolation}, and time of algorithms over the ten folds are reported in Table \ref{table_experiments_anomaly_detection_batch}. 
The results are reported for both training and test subsets of data. In most datasets, we outperform iForest, LOF, and SVM. 
In three datasets Pima, thyroid, and and SMTP, iForest is slightly better; although, the difference is not significant. In time, iForest is mostly better than iMondrian but its accuracy is often less.

\subsubsection{Online Experiments}

For the online experiments, we divided datasets into five stages using stratified sampling and introduce the streaming data to the algorithms where each new point is accumulated to previous data. The AUC of a stage is for scores up to that stage.
WBC was not used here because there was such a small relative portion of outliers it made the stratified sampling not possible.
We compared iMondrian forest with incremental LOF (with $k=10$), osPCA1, and osPCA2, reported in Table \ref{table_experiments_anomaly_detection_online}. 
The results of CICIDS on osPCA methods are not reported as they did not perform in a reasonable time on these datasets.
The AUC of iMondrian forest reported for every stage is the rate for recalculated scores of the available data.
In the first stage of osPCA1 and osPCA2, we used decremental PCA approach with oversampling \cite{lee2013anomaly}.
In different datasets, iMondrian forest has stable performance in different stages which shows its stability over the streaming data. In most cases, we outperform all the baseline methods. In terms of time, we outperform osPCA1 and osPCA2.

\section{Conclusion and Future Work}

In this paper, we proposed iMondrian forest for batch and online anomaly detection. It is a novel hybrid of isolation and Mondrian forests which are existing methods for batch anomaly detection and online random forest, respectively. 
The proposed method makes use of the best of two worlds of isolation-based anomaly detection and online ensemble learning. 
As a future direction, we seek to investigate whether it is worthy to develop the idea of isolation-based anomaly detection in other online tree structures such as binary space partitioning forest \cite{fan2019binary} which is based on the binary partitioning process \cite{fan2018binary}.
Another future work is to investigate other clustering methods instead of K-means for clustering scores.

\bibliographystyle{IEEEtran}
% \balance
\bibliography{references}

\end{document}